\newcommand{\comment}[1]{}
\newtheorem{theorem}{Theorem}
\newtheorem{lemma}{Lemma}
\def\eqref#1{equation~\ref{#1}}
\def\1{\mathbbm{1}}
\DeclareMathAlphabet{\mathsfit}{\encodingdefault}{\sfdefault}{m}{sl}
\SetMathAlphabet{\mathsfit}{bold}{\encodingdefault}{\sfdefault}{bx}{n}
\newcommand{\E}{\mathbb{E}}
\DeclareMathOperator*{\argmin}{arg\,min}
\newtheorem{definition}{Definition}
\theoremstyle{definition}
\title{Minimum-Margin Active Learning}
\author{%
  Heinrich Jiang\\
  Google Research\\
  Mountain View, CA\\
  \texttt{heinrichj@google.com} \\
  \And
  Maya Gupta\\
  Google Research\\
  Mountain View, CA\\
  \texttt{mayagupta@google.com} \\
  % examples of more authors
  % \And
  % Coauthor \\
  % Affiliation \\
  % Address \\
  % \texttt{email} \\
  % \AND
  % Coauthor \\
  % Affiliation \\
  % Address \\
  % \texttt{email} \\
  % \And
  % Coauthor \\
  % Affiliation \\
  % Address \\
  % \texttt{email} \\
  % \And
  % Coauthor \\
  % Affiliation \\
  % Address \\
  % \texttt{email} \\
}
\begin{document}

\maketitle

\begin{abstract}
We present a new active sampling method we call \emph{min-margin} which trains multiple learners on bootstrap samples and then chooses the examples to label based on the candidates' minimum margin amongst the bootstrapped models. This extends standard margin sampling in a way that increases its diversity in a supervised manner as it arises from the model uncertainty. We focus on the one-shot batch active learning setting, and show theoretically and through extensive experiments on a broad set of problems that min-margin outperforms other methods, particularly as batch size grows.   
\end{abstract}
\section{Introduction}

In many practical applications, the learner has access to an insufficient training set of labeled examples, and a large pool of unlabeled examples that could be labeled,  but labeling examples is costly. The goal of batch active learning is to select the most useful batch of examples to label given a budget that dictates the size of the allowed batch. Finding more effective batch active sampling approaches is ever more critical because machine learning is revolutionizing decision making systems.  Better batch active sampling is one of the most promising ways to make machine learning cheaper to improve. 

One of the most successful strategies for active sampling is to select candidates that the model is most uncertain about: \emph{uncertainty sampling} \cite{Lewis:1994}. In the context of discriminant-based classifiers like logistic regression or neural networks, this principle can be expressed as selecting candidates closest to the decision boundary, known as \emph{margin sampling} \cite{Scheffer:2001}. Margin sampling has no hyperparameters to tune, and is a well-regarded method that has proven difficult to beat experimentally  \cite{Reyes:2018,ScheinUngar:2007,Bilgic:2016,FuZhuLi:Survey2013,Tuia:2011}. However, when we select a large batch of candidates to label at once from a very large pool of possible candidates, margin sampling may not pick a diverse enough batch, and we find this lack of diversity grows worse with larger batch sizes or larger candidate pools.

Our proposed algorithm extends the core idea of margin sampling to a set of bootstrap models to increase the diversity of the selected batch. Two key differences to prior work is that our notion of diversity is \emph{supervised} by the model uncertainty, and that our algorithm \emph{scales linearly} in the batch size and candidate set size. We show through extensive experimentation that our algorithm consistently performs as well as margin sampling for smaller batches, and better than margin sampling for larger batches without any tuning of hyperparameters. Theoretical analysis confirms the trend that our strategy can provably dominate margin sampling for larger batch sizes.

%As summarized in a 2013 survey by Fu, Zhu and Li \cite{FuZhuLi:Survey2013}, two key goals in active sampling research are to select candidates whose label is uncertain, and to try to select diverse candidates.  
%\section{Related Work}
Our proposal is related to Query By Committee (QBC) in that both use multiple models to determine which examples to label \cite{QueryByCommittee:1992}. A common practical version of QBC creates multiple models by using  bootstrap samples of the labeled training data \cite{Abe:1998}, as we do here.  Practical QBC algorithms generally score candidates based on some notion of disagreement of a fixed committee of classifiers \cite{Abe:1998,Dagan:1995,Copa:2010}. However, a recent thorough empirical survey
showed that despite its additional computational complexity, QBC did not provide any conclusive advantage over margin sampling \cite{Bilgic:2016}. Similarly, an experimental comparison of active sampling methods using SVM's on multiple remote sensing problems \cite{Tuia:2011} found margin sampling worked as well or better than the QBC variant considered (\emph{normalized entropy query by bagging} \cite{Copa:2010}). %We hypothesize that this is because there is nothing to prevent QBC from over-selecting candidates in rarer parts of the feature space where the model disagreement is highest, and because QBC does not create sufficiently diverse samples (see Fig. 1). 
% Leave here for memory but doesn't have to be included
%The original QBC \cite{QueryByCommittee:1992} randomly draws a candidate point, then randomly draws two models that are consistent with the training data, and if they disagree, add the candidate to the selected batch. Another practical variant of the original QBC \cite{QueryByCommittee:1992} would be to use the original criteria, but substitute the two random models with two bootstrap models drawn from a fixed set of possible bootstraped models. This strategy would produce a more diverse sampling than selecting candidates that maximize classifier diversity \cite{Abe:1998}, but would be less diversity than the proposed min-margin in the sense that the proposed min-margin can also select points that all classifiers agree on, if the point is close to a margin, and two random bootstrap classifiers are more likely to disagree on points closer to the initial sample margin. 

Many active sampling methods have been devised that directly try to increase the diversity of the candidates. A number of methods penalize candidates based on their similarity to candidates already selected for the batch  \cite{Zhou:2009,Brinker:2003,Fujii:1998,Ferecatu:2007}. These methods are computationally un-appealing for large datasets and batch sizes, as they scale at least $O(B^2|\mathcal{Z}|)$ in batch size $B$ and candidate set size $|\mathcal{Z}|$  while our proposal is $O(|\mathcal{Z}|)$. A later independent study \cite{Tuia:2011} did not show clearly better results from adding such diversity criteria \cite{Ferecatu:2007} to margin sampling. Submodularity can be used to jointly select a set of diverse points close to the margin \cite{Bilmes:2015,Hoi:2006}.  Other methods use an explicit clustering step to improve diversity, e.g. \cite{Xu:2007,Nguyen:2004,Zhou:2009}. Clustering methods impose an unsupervised notion of diversity on the problem, which can hurt performance.  The \emph{active learning with small pools} is a variant of margin sampling that only considers a small pool of the larger candidate set \cite{Bottou:2007}. The motivation for this was to decrease computation, but it should also increase the diversity of selected points. Unfortunately, the method did not provide better experimental results than standard margin sampling (though it is faster, as intended) \cite{Bottou:2007}.

We will show that min-margin is able to consistently outperform other methods, including margin, by leveraging useful diversity in an intelligent way. Moreover, we are able to incorporate this added diversity while still being $O(|\mathcal{Z}|)$ in the candidate set size $|\mathcal{Z}|$.

%In preliminary experiments (not shown), we tried different variants of such techniques that prefer candidates based on some un-supervised notion of diversity, but we were unable to get them to consistently perform better than margin sampling, and found that these methods were often sensitive to the setting of hyperparameters in a way that makes them difficult to automate. 

%Zhou et al. penalize candidates based on their inter-candidate similarity. Fujii et al. \cite{Fujii:1998} prefer candidates that are dis-similar to already-labeled examples but similar to other unlabeled examples in order to get diverse candidates from higher-density regions. 
This paper is organized as follows:
\begin{itemize}
\vspace{-0.2cm}
    \item In Section~\ref{sec:min_margin}, we introduce min-margin (Algorithm~\ref{alg:min_margin}). We give a demonstration on a Gaussian simulation and provide detailed intuition for how and why min-margin works.
    \item In Section~\ref{sec:theory}, we give a theoretical analysis on a simplified setting showing that min-margin outperforms margin sampling for sufficient batch size and candidate set size.
    \item In Section~\ref{sec:experiments}, we compare min-margin to a number of baselines on  MNIST, Fashion MNIST, benchmark UCI datasets, and two large real-world applications. We show that min-margin is a consistently good method across batch sizes.
\end{itemize}

%\section{Preliminaries}

\section{Min-Margin Active Sampling Algorithm}\label{sec:min_margin}
We introduce the min-margin algorithm and motivate it through a Gaussian simulation.
\subsection{Preliminaries}
We assume the standard set-up that one has an initial sample of $N$ labeled training examples $\mathcal{T}_0 = \{(x_i, y_i)\}$ for $i=1, \ldots, N$ where $x_i \in \mathbb{R}^D$ and $y_i \in \mathbb{N}$. Let $N_g = \sum_{i=1}^N \1_{y_i = g}$ be the number of initial examples labeled class $g$, for classes $g \in \mathbb{N}$.  We also assume a candidate sample set $\mathcal{Z}$ of examples $z \in \mathbb{R}^D$ that can be chosen to be labeled. The goal is to select $B$ candidate examples from $\mathcal{Z}$ to be labeled. 

Given a training set $\mathcal{T}$, a learning procedure $H$ returns a model $h := H(\mathcal{T})$, where $h(z;g)$ is the $g$th class discriminant function \cite{HTF} for an input $z \in \mathbb{R}^D$. For example, if the learner was a neural network, $h := H(\mathcal{T})$ would produce a trained neural network and $h(z;g)$ could be the softmax probability of class $g$ for example $z$.

We build on margin sampling \cite{Scheffer:2001}, which is a popular and well-regarded active learning strategy that selects candidates with the smallest margin, where the margin is defined as follows.

\begin{definition}[Margin (\emph{Scheffer et al. 2001} \cite{Scheffer:2001})]
\begin{equation*}
\textrm{margin}(h, z) = h(z; \hat{y}_1(z)) - h(z; \hat{y}_2(z))
\end{equation*}
where $\hat{y}_1(z)$ and $\hat{y}_2(z)$ are the highest and second-highest scoring classes under predictor $h$: 
\begin{equation*}
\hat{y}_1(z) = \arg \max_{g}  h(z; g) \hspace{10mm} \textrm{ and } \hspace{10mm} \hat{y}_2(z) = \arg \max_{g, g \neq \hat{y}_1(z)} h(z; g).
\end{equation*}
\end{definition}

\subsection{Min-margin Algorithm}

We propose a new active sampling method we term \emph{min-margin} in Algorithm \ref{alg:min_margin} that trains multiple models on stratified bootstrap samples of the labeled training data \cite{EfronTibshirani:1993}, then selects candidate examples that have the smallest margin of \emph{any} of the bootstrapped models.  Algorithm \ref{alg:min_margin} includes two hyperparameters: the number of bootstrapped models $K$ and the bootstrap sample size fraction $\beta$. We recommend setting these bootstrap values to traditional bootstrap defaults of $K=25$ bootstraps and $\beta=1$ \cite{EfronTibshirani:1993}, which are the settings we fix throughout all of our experiments (except those where we show the impact of those two hyperparameters).

\begin{algorithm}[t]
   \caption{Min-Margin Active Sampling}
   \label{alg:min_margin}
\begin{algorithmic}
   \STATE {\bf Inputs}: Initial sample $\mathcal{T}_0$, candidate sample set $\mathcal{Z}$, number of bootstrapped models $K$, bootstrap sample size fraction $\beta$, number of candidate examples to select $B$, learning procedure $H$ 
   \STATE {\bf Bootstrap}: For each $k = 1, \ldots, K$, let $\mathcal{T}_k$ be a random sample with replacement from $\mathcal{T}_0$ of $\lfloor \beta N_g \rfloor$ examples from each class $g$, and $h_k := H(\mathcal{T}_k)$
   \STATE {\bf Score}: For each candidate $z \in \mathcal{Z}$, let $score(z) := \displaystyle \min_{k \in [K]} \textrm{margin}(h_k, z)$. 
   \STATE {\bf Return} the $B$ candidates from $\mathcal{Z}$ with lowest $score$. 
\end{algorithmic}
\end{algorithm}

\subsection{Gaussian Simulation}

We illustrate how min-margin works and compare to margin and committee sampling in Fig.~\ref{fig:simulation} on a standard two-dimensional Gaussian simulation. We draw a small initial seed sample, as shown in Fig.~\ref{fig:simulation} Top-Left, along with the decision boundary of a model trained on the initial sample and the true decision boundary.  We draw a candidate set of 8000 samples to select from and test on 10000 samples.
\begin{figure}[h]
  \begin{center}
    \includegraphics[width=0.45\textwidth]{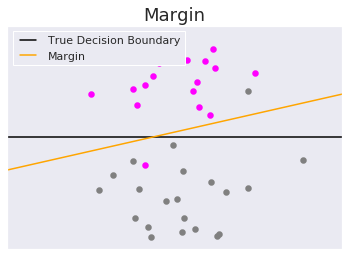}
    \includegraphics[width=0.45\textwidth]{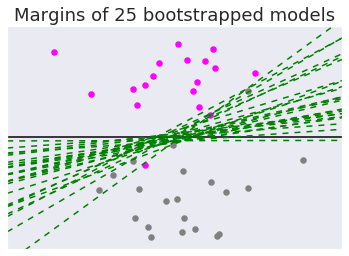} \\
    \includegraphics[width=0.45\textwidth]{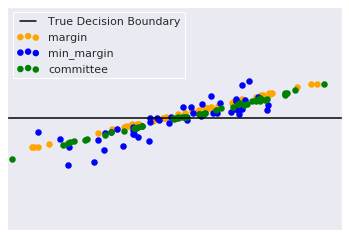} 
    \includegraphics[width=0.45\textwidth]{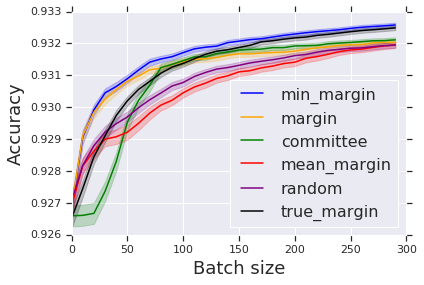} 
    \end{center}
  \caption{{\bf Illustrative Gaussian simulation}: Training examples for each class are generated from a two-dimensional Gaussian distribution with identity covariance but different means, and the learner is logistic regression. {\bf Top-Left}: The  initial random draw of $40$ training samples is shown, along with the true margin (black) and the decision boundary (yellow) for a logistic regression trained on the initial sample, which is used for the standard margin active sampling. {\bf Top-Right}: The decision boundaries of the $25$ models trained on the bootstrapped samples (green dashed lines); these decision boundaries are used by the committee and proposed min-margin active sampling. {\bf Bottom-Left}:  The first $50$ selected examples chosen from the $8000$ candidates by standard margin, QBC, and the proposed min-margin. {\bf Bottom-Right}: Test accuracy after re-training with selected samples added to the train set, plotted for different batch sizes, averaged over $500$ random draws of the train data, candidate set, and test data. The standard deviation is shaded. Min-margin is the best method for all batch sizes.}
\label{fig:simulation}
\end{figure}

Margin sampling would query points that are very close to the decision boundary produced by training on the initial sample, shown in yellow in Fig.~\ref{fig:simulation}  Top-Left. However, this decision boundary can be highly biased. Thus, margin sampling can fail to choose points in large critical regions necessary to learn the true decision boundary, even if given a large batch size $B$. Indeed, one sees in the Bottom-Left that margin queries points in a very narrow band defined by the initial biased decision boundary. The larger the candidate set $\mathcal{Z}$ is relative to $B$, the more margin sampling will suffer from this effect.

We note that committee algorithms that score based on a notion of maximal disagreement will have a similar problem of lacking diversity. Fig.~\ref{fig:simulation} Top-Right  shows  25 decision boundaries from the 25 models trained on 25 bootstrap samples.  Fig.~\ref{fig:simulation} Bottom-Left shows the points (green) selected by QBC to maximize the disagreement of the classification decisions of the 25 models \cite{Abe:1998}. There is a relatively narrow bi-conical region of the feature space where the 25 classifiers maximally disagree, and again the larger the candidate set $\mathcal{Z}$, the less diverse the green points will be, because with a large enough candidate set they can all occur within the narrow bi-conical region defined by maximal classifier disagreement. 
%Another problem is that if the candidate set $\mathcal{Z}$ is very large, there is a high chance that QBC will pick up candidates in sparse regions where classifier disagreement is high but where improving the classifier may not matter; in fact in Fig.~\ref{fig:simulation} Bottom-Left one sees green points selected in the sparse parts of the feature space. 

In contrast, the proposed min-margin method actively samples around each of the 25 bootstrapped margins, enabling it to sample throughout the region defined by the multiple margins shown in Fig.~\ref{fig:simulation} Top-Right.  Fig.~\ref{fig:simulation} Bottom-Left confirms that the min-margin selected points (blue) are more diverse. In this way, min-margin takes better advantage of the diversity of the set of bootstrapped models, and thus can choose better samples even if the initial sample was insufficient to learn a reasonable initial margin. That is, min-margin can leverage multiple weak learners for better active sampling. Fig.~\ref{fig:simulation} Bottom-Right confirms that min-margin's increased  diversity orthogonal to the initial margin adds value, with min-margin achieving the highest test accuracy for all tested batch sizes (plots shown were averaged over 500 random draws of the simulation). For small batches, standard margin is equally good, but for larger batches the extra diversity of min-margin is important. 

Fig.~\ref{fig:simulation} Bottom-Right also compares to pure random sampling, and to sorting by the margin of the mean of the bootstrapped models. This mean-margin approach performs poorly, which motivated us to also compare to the \emph{oracle} approach of sorting the candidates with respect to the \emph{true margin} to the true decision boundary (which is available for a Gaussian simulation, but unrealistic in practice). Remarkably, min-margin even outperforms margin sampling with the true margin, and is the only method to beat the true margin for all batch sizes. The relatively poor performance of true margin for small batch sizes is because the initial sample $\mathcal{T}_0$ still dominates the training and the model most needs examples that can correct the initial samples' confusions,  whereas in the large batch setting the true margin selects enough candidates to delineate the correct decision boundary regardless of the initial sample.

In fact, a remarkable property of the added diversity of min-margin is that it {\it adapts} to the uncertainty of the initial sample. If the initial sample is noisy, then the initial model will likely have high bias, but then the set of bootstrapped models will also likely have high variability, which makes it more likely that the bootstrap models collectively cover the region needed to improve learning. If the initial sample was sufficient to train a reasonable model, then it is likely that less diversity is necessary, and also that the bootstrapped models will have less variability.% In other words, the diversity is supervised such that selected points are more likely to occur where the bootstrapped margins are dense and where the candidates are dense. Thus, the proposed min-margin is better at  prioritizing candidates whose uncertainty is due to conflicting evidence rather than due to insufficient evidence, which has been argued to be an important distinction for active sampling \cite{Sharma:2016}. 

%In summary, the proposed min-margin is an effective way to increase the supervised diversity of active samples by a simple extension of  tried-and-tested margin sampling.
\section{Theoretical Analysis}\label{sec:theory}

In this section, we provide theoretical insights for why min-margin can outperform margin. We note that margin-based sampling is known to be quite difficult to analyze: there are only a few prior works, and these assume the simplified setting of a uniform distribution on the unit ball with a linear classifier that passes through the origin \cite{balcan2007margin,wang2016noise}. We make similar assumptions, and further restrict to the two-dimensional linearly separable setting on the unit half-sphere.

Our result, Theorem~\ref{theorem:linear_svm}, says that as long as the candidate set and batch size are sufficiently large, and the min-margin procedure uses a diverse enough set of learned models (i.e. all possible models obtained from stratified bootstrap samples), and using a linear SVM as the learner, min-margin sampling will choose samples closer to the true decision boundary than margin sampling. In other words, min-margin does a better job at finding the more informative examples in the linearly separable case than margin sampling. As a reminder, a linear SVM in the linearly separable case in the binary classification setting would learn a linear decision boundary which correctly separates the two classes and maximizes the minimum distance of any example to decision boundary.

\begin{theorem}\label{theorem:linear_svm} Suppose that the data is uniformly distributed over the unit half-sphere in $\mathbb{R}^2$ with non-negative $x$-coordinate values (i.e. $ \{(\cos(\theta), \sin(\theta))\in \mathbb{R}^2 : \theta \in \left[-\frac{\pi}{2}, \frac{\pi}{2}\right] \}$) with two classes which are separable by the $x$-axis. Let $\mathcal{T}_0$ be the initial sample with $m$ samples from each class and $\mathcal{Z}$ be the candidate set. Let the learner be a linear SVM whose decision boundary passes through the origin.
 Suppose that Algorithm~\ref{alg:min_margin} uses $\beta \ge \frac{1}{m}$ and the learned models $\{h_1,...,h_K\}$ span all possible models obtained from a stratified bootstrap sample.
Suppose that:
\begin{align*}
    |\mathcal{Z}| \ge 12B\cdot m, \hspace{0.2cm} B \ge 3m^2\log(m), \hspace{0.2cm} m \ge 100.
\end{align*}
For samples $S \subseteq \mathcal{Z}$, define $\Theta^*(S) := \min \{ \theta^2  : (\cos(\theta), \sin(\theta)) \in S, \hspace{0.1cm} \theta \in [-\pi/2,\pi/2]\}$. That is, $\Theta^*(S)$ is the square of the angle w.r.t. the positive $x$-axis of the example in $S$ closest to the true decision boundary.
Let $S_{\text{margin}}, S_{\text{min-margin}} \subseteq \mathcal{Z}$ be the batches chosen based on margin and min-margin sampling, respectively. Then,
\begin{align*}
    \E[\Theta^*(S_{\text{min-margin}})] < \E[\Theta^*(S_{\text{margin}})].
\end{align*}
%For linear classifier $h$, define $ \mathcal{E}(h) := E_{\mathcal{Z}, \mathcal{T}_0}[\theta_h^2]$ where $\theta_h$ is the angle of the decision boundary of $h$ w.r.t. the positive $x$-axis. Then, we have
%\begin{align*}
%\end{align*}
%where $h_{\text{margin}}^*$ and $h_{\text{min-margin}}^*$ are the classifiers obtained from using the margin and min-margin sampling strategies, respectively.
\end{theorem}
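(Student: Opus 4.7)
My approach is to reduce the comparison to a one-dimensional analysis of angles. I would sort the training angles as $0 < \theta_1^+ < \cdots < \theta_m^+$ and $-\pi/2 < \theta_1^- < \cdots < \theta_m^- < 0$, parameterize each SVM line through the origin by its angle $\phi$, and observe that the margin of a candidate at angle $\alpha$ to boundary $\phi$ is $|\sin(\alpha - \phi)|$. In the linearly separable case, the max-margin boundary bisects the nearest positive and negative training points, so the SVM trained on $\mathcal{T}_0$ has boundary $\phi_0 = (\theta_1^+ + \theta_m^-)/2$. Because $\beta \geq 1/m$ permits a stratified bootstrap that isolates any single chosen sample per class, the set of achievable bootstrap boundaries is exactly $\Phi = \{(\theta_j^+ + \theta_k^-)/2 : 1 \leq j, k \leq m\}$, with $|\Phi| = m^2$ and $\phi_0 \in \Phi$. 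Margin sampling then selects candidates closest to $\phi_0$, while min-margin selects candidates closest to any element of $\Phi$, so both batches become neighborhoods in angle space.

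\textbf{Reduction via cluster structure.} Next, I would use standard order-statistic concentration for the $|\mathcal{Z}|$ uniform candidates to describe the two batches. Under $|\mathcal{Z}| \geq 12Bm$ and $B \geq 3m^2 \log m$, margin sampling's batch is the candidates in $[\phi_0 - r, \phi_0 + r]$ with $r = \pi B / (2|\mathcal{Z}|) \leq \pi/(24m)$, while min-margin's batch lies within some $d^* = O(\pi B/(2 m^2 |\mathcal{Z}|)) = O(\pi/(24m^3))$ of $\Phi$ (the $B \geq 3m^2 \log m$ hypothesis gives $\Omega(\log m)$ candidates per cluster, so Chernoff provides sharp concentration). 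This yields $\Theta^*(S_{\text{margin}}) = (|\phi_0| - r)_+^2 + O(|\mathcal{Z}|^{-2})$ and $\Theta^*(S_{\text{min-margin}}) \leq |\phi^*|^2$ where $\phi^* = \argmin_{\phi \in \Phi} |\phi|$, reducing the theorem to the inequality $\mathbb{E}[(|\phi_0| - r)_+^2] > \mathbb{E}[|\phi^*|^2]$.

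\textbf{Two expectation bounds and the main obstacle.} For the margin side, $\theta_1^+$ and $|\theta_m^-|$ are independent minima of $m$ uniforms on $(0, \pi/2)$, each well-approximated by an exponential with rate $2m/\pi$; hence $2\phi_0 = \theta_1^+ - |\theta_m^-|$ is approximately Laplace and $|\phi_0|$ approximately exponential with rate $4m/\pi$, and direct integration against this density with $r \leq \pi/(24m)$ gives $\mathbb{E}[(|\phi_0| - r)_+^2] \geq c_1/m^2$ for an explicit constant $c_1 > 0$. For the min-margin side, $|\phi^*|$ is the nearest-neighbor distance between the sets $\{\theta_j^+\}$ and $\{-\theta_k^-\}$. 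Conditioning on $\{\theta_j^+\}$ and using independence of the $m$ negative angles, each $-\theta_k^-$ hits the $2t$-neighborhood of $\{\theta_j^+\}$ with probability $\Omega(mt)$ (the union is $m$ intervals of width $4t$ and has total measure $\Omega(mt)$ with high probability), so $\Pr(|\phi^*| > t) \leq (1 - \Omega(mt))^m \leq \exp(-\Omega(m^2 t))$. Taking $t = C \log(m)/m^2$ gives a polynomially small tail, yielding $\mathbb{E}[|\phi^*|^2] = O(\log^2(m)/m^4)$. For $m \geq 100$, $\log^2(m)/m^4 \ll 1/m^2$, giving the strict inequality. The hard part will be this birthday-style bound on $|\phi^*|$: the $m^2$ pairs are not independent since they share components, and the conditioning trick plus Chernoff argument requires uniformly lower-bounding the per-trial hit probability, which itself reduces to controlling how spread-out the positive order statistics are --- a fact provable from the spacings of $m$ uniforms on $(0,\pi/2)$. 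A secondary care point is handling the boundary events where $|\phi_0| < r$ or where clusters of $\Phi$ overlap under radius $d^*$; fortunately both events have small probability and either help min-margin or contribute negligibly to the margin expectation, so they can be absorbed into the constants.
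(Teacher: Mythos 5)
Your proposal is correct in outline and shares the paper's skeleton: reduce everything to angles, note that the SVM boundary bisects the two innermost training points so the achievable bootstrap boundaries are the $m^2$ pairwise midpoints $(\theta_j^+ + \theta_k^-)/2$, observe that margin sampling concentrates its batch in a band of half-width $\approx \pi B/(2|\mathcal{Z}|)$ around the biased initial boundary while min-margin places candidates near every bootstrap boundary, and then compare expectations of the squared angle. Where you genuinely diverge from the paper is the central quantitative step. You go after the sharp ``birthday-paradox'' bound $\E[|\phi^*|^2] = \tilde{O}(\log^2 m/m^4)$ for the least-biased pairwise midpoint, handling the dependence among the $m^2$ pairs by conditioning on one class and controlling the spacings of the other; this is the hardest part of your plan (you correctly flag it), and it buys a separation of \emph{order of magnitude} between the two sides. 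The paper deliberately avoids this: it only bounds the best bootstrap bias by comparing a single fixed order statistic (the median $\theta_{(\lfloor m/2\rfloor)}$ of one class, localized near $\pi/4$ by Chebyshev) against the $m$ points of the other class, reducing to an i.i.d.\ minimum computed exactly in its Lemma~1. That gives only $\frac{5\pi^2}{64(m+1)(m+2)}$ --- the same $1/m^2$ order as the margin side but with a smaller constant --- and the theorem is then closed by exact Beta-moment computations for $\mathcal{E}(h_{\text{margin}})$ and careful constant bookkeeping (its Lemma~2 and the event that min-margin actually selects a candidate near the best bootstrap model, with probability $1-(1-1/m^2)^B$), winning by the razor-thin margin $\tfrac{13\pi^2}{128} < \tfrac{14\pi^2}{128}$ per $(m+1)(m+2)$. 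So your route, once the birthday lemma and the selection/threshold concentration are fully executed (and the exponential approximation to the minimum order statistic replaced by the exact Beta computation for a rigorous lower bound on the margin side), proves a strictly stronger and more explanatory statement, at the price of a dependence argument the paper's more elementary proof never needs; conversely, the paper's proof is shorter but hinges on delicate constants and a somewhat hand-waved treatment of which candidates min-margin selects, a point your cluster-counting/Chernoff argument actually treats more carefully.
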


The proof of Theorem~\ref{theorem:linear_svm} is involved and is in the Appendix. Although Theorem~\ref{theorem:linear_svm} is in a simplified setting, it nonetheless provides insights into when min-margin has an advantage over margin and other similar methods. It confirms the intuition that with a small initial sample a learner trained can have high bias, and thus margin can perform poorly due to this bias while min-margin will not suffer from this effect as much as long as the batch size is large enough to leverage the diversity in the bootstrapped margins.
\section{Experiments}\label{sec:experiments}
We compare to margin sampling ({\bf margin}) and random sampling ({\bf random}), and to three alternate ways to use bootstrapped models: (i) the practical committee algorithm of Abe and Matmitsuka \cite{Abe:1998} ({\bf committee}), (ii) scoring by the variance in the softmax scores ({\bf var-softmax}), (iii) scoring by the mean of the bootstrapped margins ({\bf mean-margin}). The first round of experiments also has comparisons to the method of Brinker \cite{Brinker:2003} which balances the margin score with a pairwise diversity term ({\bf balanced-margin}),  $k$-centers coresets \cite{sener2017active} ({\bf k-centers}), and a half-half mixture of random and margin sampling ({\bf random-margin-mix}). More details about these methods can be found in the Appendix.

%All methods that use the bootstrapped models used the same set of bootstrapped classifiers. All compared methods are hyperparameter-free unless otherwise noted (for the bootstrap models, we used our recommended default of $K = 25$ bootstrap models, and $\beta = 1$ bootstrap fraction).

%In practice, we find that it is common to actively label a large batch of samples at once, then re-tune the model architecture and training hyper-parameters. If a second round of active sampling is done, it often is done on completely new candidate pool of examples. That process is difficult to mimic as a controlled experiment comparing algorithms. Instead, 
%Here we just focus on the first step: one-shot active learning on potentially large batches, drawn from potentially very large candidate sets. 
All of our experiments are for the one-shot setting; we only consider a single batch, rather than  the sequential setting where there are multiple batches and the model is retrained after each batch. We plot performance for different batch-sizes.

\subsection{MNIST and Fashion MNIST Experiments}
We test the performance on the MNIST and Fashion MNIST datasets.   We use an initial sample size of $100$ from the respective standard training set, and actively sample batches up to 2000 examples. Results in Fig.~\ref{fig:mnist} were averaged over $100$ different random splits between initial and candidate samples and test using the standard testing set. For MNIST, we use a neural network with two hidden layers each with 512 units and ReLU activations. For Fashion MNIST, we use the basic neural network used in the Keras tutorial, which has a hidden layer with 128 units. For all experiments, we train using the ADAM optimizer under default settings and train for $100$ epochs. The results in Fig.~\ref{fig:mnist} show that min-margin is the best strategy on both datasets, and again shows larger gains over margin for the larger batch sizes.

\begin{figure}[h]
  \begin{center}
  \begin{tabular}{ll}
    \includegraphics[width=0.48\textwidth]{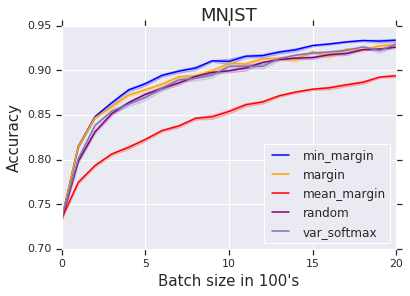} &
    \includegraphics[width=0.48\textwidth]{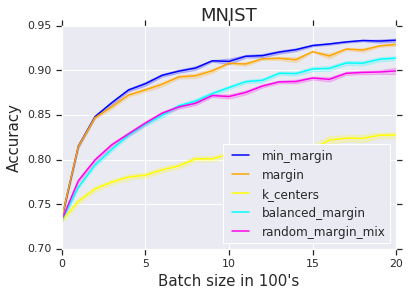} \\
    \includegraphics[width=0.48\textwidth]{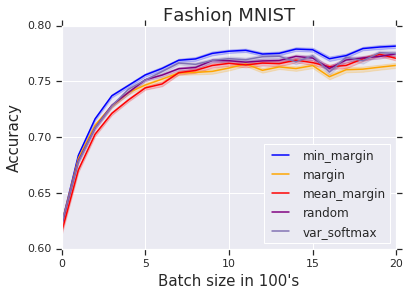} & 
    \includegraphics[width=0.48\textwidth]{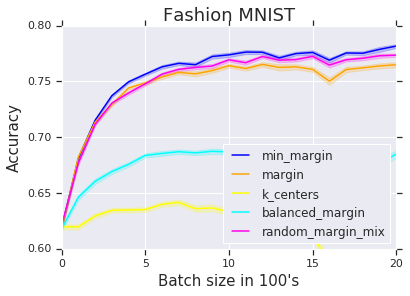}
        \end{tabular}
    \end{center}
  \caption{{\bf MNIST and Fashion MNIST Results}:  Results are shown averaged over $100$ random initial/candidate/test splits of the data with standard error bands shaded. Shown are the test accuracies vs. the batch size of the selected active samples. For both datasets, min-margin is one of the best performing methods for every batch size. The left and right plots for each datasets compare against different baselines.}
	\label{fig:mnist}
\end{figure}

\subsection{Benchmark Datasets}
We now show the performance of min-margin against the baselines on benchmark UCI datasets that are commonly used for testing active sampling methods: nomao (118 features, 34,465 datapoints), shuttle (9 features, 43,500 datapoints), magic04 (10 features, 19,020 datapoints), a9a (123 features, 32,561 datapoints) and cod-rna (8 features, 59,535 datapoints). Due to space, full results are in the Appendix.

For this set of experiments, we use logistic regression as the learner, trained using scikit learn's implementation \cite{scikit-learn} under default settings. We ran the experiment on $100$ random initial/candidate/test splits of each dataset, where the initial sample size is 100 and candidate/test sets are of equal size, and compute the mean test accuracies averaged over the runs. %A co-lab will be made available to run these experiments, but in practice we used cloud services to massively parallelize the different runs. 

The results are shown in Fig.~\ref{fig:benchmark}. We see that min-margin performs competitively across datasets and batch sizes. Margin performs consistently well for small batch sizes, but can be quite poor with larger batch sizes, as seen in  both shuttle and magic04. In contrast, the proposed min-margin performs competitively with the best performer at every batch size on all four datasets. %as also seen in the Gaussian simulation results shown in Fig.~\ref{fig:simulation}.   

\begin{figure}[h]
\begin{center}
\begin{tabular}{ll}
    \includegraphics[width=0.48\textwidth]{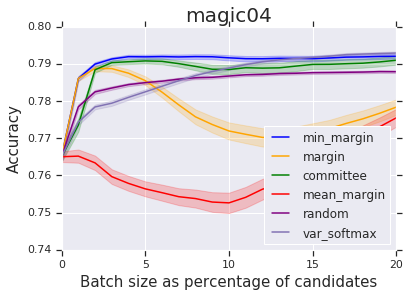}  & 
    \includegraphics[width=0.48\textwidth]{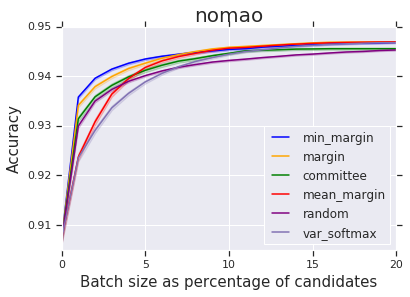} \\
    \includegraphics[width=0.48\textwidth]{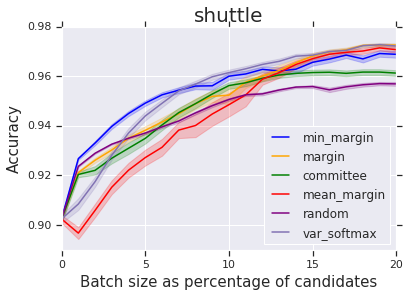} &
    \includegraphics[width=0.48\textwidth]{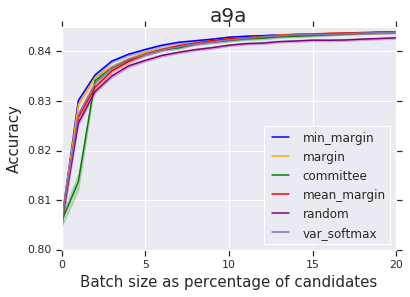}
    \end{tabular}
    \end{center}
\caption{{\bf Benchmark datasets}: The learner is logistic regression and results are shown averaged over $100$ random initial/candidate/test splits of the data with standard error bands. Shown are the test accuracies vs. the batch size of the selected active samples and the standard error is shaded.}
	\label{fig:benchmark}
\end{figure}

\subsection{Robustness To Hyperparameters}
The results in this paper are all hyperparameter tuning-free except for this section, in which we investigate how the min-margin performance changes if we change the number of bootstrap models $K$ while keeping the bootstrap sample size fixed to be the size of the initial training sample data (Fig.~\ref{fig:tuning} top row), or if we change the size of the bootstrap sample relative to the size of the dataset $\beta$ while keeping the number of bootstrapped models fixed to $K=25$ (Fig.~\ref{fig:tuning} bottom row).  The results in Fig.~\ref{fig:tuning} top row suggest that as long as the number of bootstrapped models is sufficiently large, the performance of min-margin is not very sensitive to this setting. All other experiments in this paper use a default $K = 25$.

On the other hand, Fig.~\ref{fig:tuning} shows one can improve the performance of min-margin by tuning the size of the bootstrap sample via $\beta$. A smaller bootstrap sample will generally give rise to more diverse bootstrapped models, resulting in give active sampling that is closer to random sampling - the smaller bootstrap size works best on shuttle, which is the dataset that random sampling works relatively best on. Using a larger bootstrap sample size should in general produce less diverse models, and thus it is not surprising that on magic04 the largest bootstrap sample size (dark blue) acts most like margin, with its poorer results for large batch sizes.   All other experiments in this paper use a default $\beta = 1$. 

\begin{figure}[h]
\begin{center}
\begin{tabular}{lll}
\includegraphics[width=0.31\textwidth]{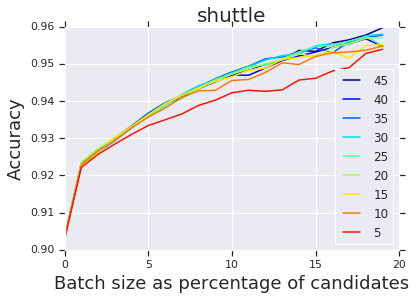}  & 
\includegraphics[width=0.31\textwidth]{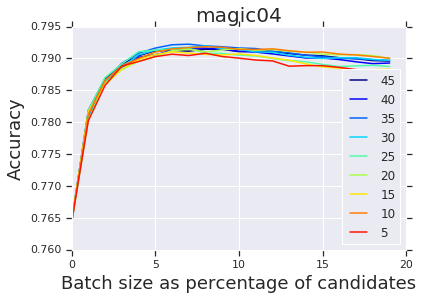}  & 
\includegraphics[width=0.31\textwidth]{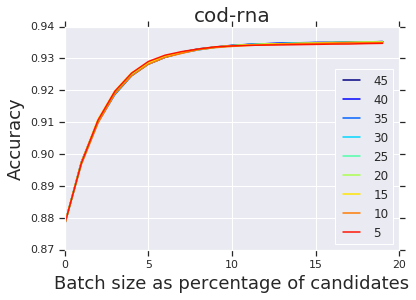} \\ 
\includegraphics[width=0.31\textwidth]{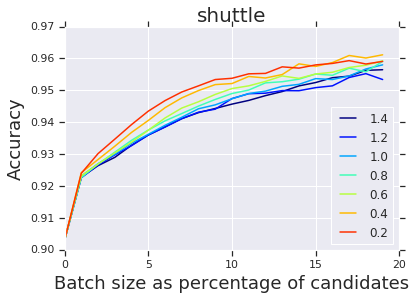} & 
\includegraphics[width=0.31\textwidth]{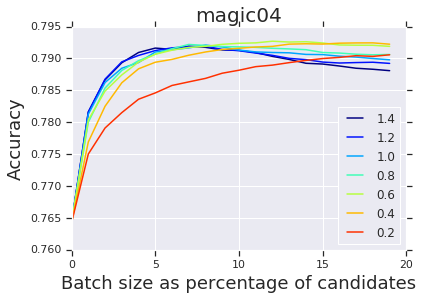}   & 
\includegraphics[width=0.31\textwidth]{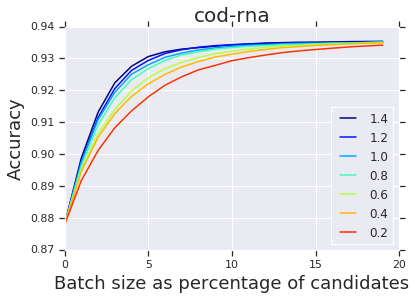} 
\end{tabular}
\end{center}
\caption{{\bf Top Row: Performance across number of bootstrapped models $K$}: We see that min-margin's performance is stable in the number of models used as long as the number of models is sufficiently high. {\bf Bottom Row: Performance across size of bootstrap sample as a fraction of original dataset $\beta$}: We see that there are opportunities to improve the performance of min-margin by adjusting the size of the bootstrap sample. For example, smaller bootstrap size is better for shuttle while larger bootstrap size is better for cod-rna and for magic04 there appears to be a trade-off between small and large batch size performance.}
  \label{fig:tuning}
\end{figure}

\subsection{Result Filtering and Query Intent Experiments}
In this section, we show the performance of the proposed min-margin on two proprietary binary classification datasets from a large internet services company. As in the other experiments, we limit the candidate set to be a subset of already-labeled examples, but in practice the true candidate set size $|\mathcal{Z}|$ would be orders of magnitude larger, increasing the importance of ensuring diversity.

\noindent \textbf{Results Filtering}: The task is to classify whether a candidate result is promising enough to be worth more expensive processing. This dataset has 16 features and 1,282,532 labeled examples.

\noindent \textbf{Query Intent}: The task is to classify the intent of a query. The dataset has 32 features and 420,000 examples. 

For both experiments we used a $2$-layer neural network as the learner with $10$ hidden units, trained using the ADAM optimizer with default settings and $20$ train epochs with mini-batch sizes of $100$. We average over $100$ random initial/candidate/test splits of the data where the initial sample size is 5000 training examples, and the rest of the data was split evenly between the candidate set and the test set. We show in the Appendix how results change as we vary the initial sample.

We show the results in Figure~\ref{fig:real_world} for 1,000 to 50,000 active sample batch sizes. Again, one sees that margin sampling's performance relative to random sampling degrades as batch sizes increase (x-axis on all four plots), especially as one moves to the larger batch sizes (bottom row). In contrast,  min-margin outperforms all other methods on the both problems for the larger batch sizes, while remaining competitive on the smaller batch sizes. We don't show the results for the other baselines as they were computationally infeasible for these large datasets. 
\begin{figure}[H]
  \begin{center}
    \includegraphics[width=0.48\textwidth]{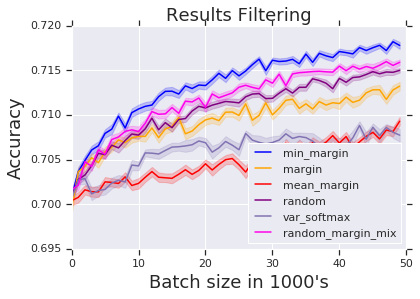}
    \includegraphics[width=0.48\textwidth]{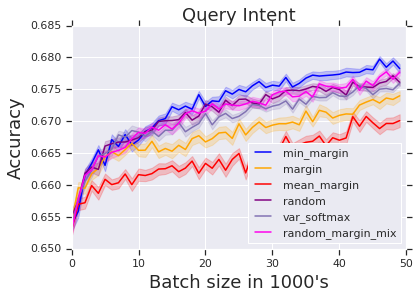}
    \end{center}
  \caption{{\bf Left}: Results Filtering. {\bf Right}: Query Intent.  Shown are the test accuracies vs. the additional batch size chosen and the standard error is shaded. We see that min-margin outperforms all other methods for Result Filtering and is the only method to do as well as random sampling for Query Intent on both the small and large batch situations.}
	\label{fig:real_world}
\end{figure}

\section{Conclusion and Discussion}
We have extended margin sampling to take the minimum margin over multiple bootstrapped models. This approach increases the diversity for better performance at batch active sampling. Experimentally across a broad set of problems, we have shown the proposal is competitive with margin sampling, and for larger batch sizes can be substantially better. Compared to other diversity-increasing strategies such as clustering candidates or penalizing candidate similarity, the proposed method is much more computationally efficient because it scales linearly in the candidate set size and training set size and is parallelizable. Further, the proposed method achieves its diversity by extending the core idea of margin sampling, rather than trading it off with an explicit diversity objective, and naturally adapts the amount of diversity to the model uncertainty. 

\clearpage
\section*{Acknowledgements}
We thank Giulia DeSalvo, Afshin Rostamizadeh, and Tim Hesterberg for helpful discussions.
{
\bibliography{ref}
\bibliographystyle{plain}
}

\clearpage
{
\appendix
\onecolumn
{\Large \bf Appendix}

\section{Proof of Theorem~\ref{theorem:linear_svm}}
We begin with a sketch of the proof. The set-up for the theorem is illustrated in Fig. \ref{fig:proof}. Recall that the initial decision boundary for the linear SVM will be midway between the positive and negative initial training examples closest to the x-axis, in Fig. \ref{fig:proof} this initial decision boundary is defined by the training examples $x_1$ and $x_6$. This initial decision boundary will have some bias.

The three key steps of the proof are as follows. (1) We first show that margin sampling will choose points within some small region of the initial decision boundary (marked by the blue square in the figure), and that this region will be limited in size by $B/\| \mathcal{Z} \|$, that is, the larger the batch size $B$ is the larger the blue-region will be, but the larger the candidate set size $\| \mathcal{Z} \|$, the smaller the blue region will be. (2) Next, we show that on average there will exist a bootstrap model that produces a decision boundary with less bias. For example, in Fig. \ref{fig:proof} the bootstrap decision boundary produced by a bootstrap sample that does not have $x_6$ would produce the less-biased bootstrap decision boundary shown, defined by the midpoint of examples $x_1$ and $x_7$. (3) Finally, the min-margin sampling technique will on average select some samples that are closest to that bootstrap model and thus the samples found based on min-margin sampling will closer to the true decision boundary than that of margin sampling.

\begin{figure}[h]
  \begin{center}
  \begin{tabular}{c}
    \includegraphics[trim={3cm 4.5cm 3cm 3cm},clip,width=0.9\textwidth]{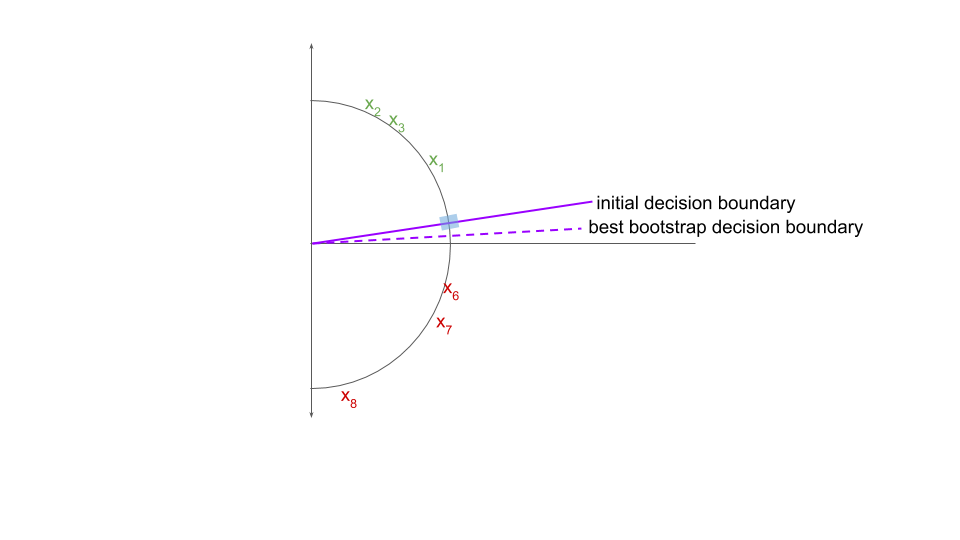}
        \end{tabular}
    \end{center}
  \caption{A toy illustration of the theorem set-up. Green denotes initial positive training examples, red denotes initial negative training examples. }
	\label{fig:proof}
\end{figure}

We first provide a couple of lemmas we will use later. 

\begin{lemma}\label{lemma:1} Suppose that $\theta_1,...,\theta_m$ are i.i.d. random variables drawn from a uniform distribution on $[0, 1]$.
Then, if $m \ge 20$, we have
\begin{align*}
    \E\left[\min_{j \in [m]} \left(\frac{1}{4} - \theta_j \right)^2\right] \le \frac{1.01}{2(m+1)(m+2)}.
\end{align*}
\end{lemma}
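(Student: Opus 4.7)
The plan is to express the quantity as a one-dimensional integral of the survival function of $Z := \min_{j}\lvert\theta_j - 1/4\rvert$ and evaluate that integral in closed form. Because the $\theta_j$ are i.i.d.\ uniform on $[0,1]$, a short computation based on independence gives
\begin{align*}
    P(Z > z) = \begin{cases} (1-2z)^m, & 0 \le z \le 1/4, \\ (3/4 - z)^m, & 1/4 \le z \le 3/4, \\ 0, & z > 3/4, \end{cases}
\end{align*}
where the two regimes arise because for $z \in [1/4, 3/4]$ the interval $[1/4-z,\,1/4+z]$ is truncated on the left by $0$ and has length $1/4+z$ rather than $2z$. Keeping track of this boundary regime is the one place where the argument requires care.

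Next I would use $\E[Z^2] = 2\int_0^{3/4} z\,P(Z>z)\,dz$ and split the integral at $z = 1/4$. On $[0, 1/4]$ the substitution $v = 1-2z$ turns the integrand into a linear combination of $v^m$ and $v^{m+1}$ on $[1/2, 1]$; on $[1/4, 3/4]$ the substitution $w = 3/4 - z$ turns it into a linear combination of $w^m$ and $w^{m+1}$ on $[0, 1/2]$. Both pieces are elementary antiderivatives and produce Beta-type terms of order $1/((m+1)(m+2))$ together with terms carrying an explicit $(1/2)^{m+1}$ factor.

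Combining the two pieces and simplifying should yield an identity of the shape
\begin{align*}
    \E[Z^2] = \frac{1}{2(m+1)(m+2)} + \Bigl(\tfrac{1}{2}\Bigr)^{m+1}\!\left(\frac{1}{m+1} - \frac{3}{4(m+2)}\right),
\end{align*}
so the desired bound reduces to showing the correction is at most $0.01 \cdot \frac{1}{2(m+1)(m+2)}$. Multiplying through by $2(m+1)(m+2)$, this is equivalent to the concrete numerical inequality $(m+5)/2^{m+2} \le 10^{-2}$, whose left side is decreasing in $m$ and already of order $10^{-6}$ at $m = 20$, so the bound holds with enormous room to spare. The main obstacle is nothing deep; it is simply keeping the boundary-truncation regime $z \in [1/4, 3/4]$ straight in the survival-function calculation, since ignoring it would both spoil the clean closed form and inflate the correction.
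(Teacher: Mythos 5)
Your proposal is correct and takes essentially the same route as the paper's proof: both derive the exact two-regime law of the minimum distance of the $\theta_j$'s to $1/4$ (keeping the truncation at $0$ straight) and integrate exactly, leaving the main term $\frac{1}{2(m+1)(m+2)}$ plus an exponentially small correction, the only difference being that you use the survival-function (tail) formula for $\E[Z^2]$ while the paper integrates the density of the minimum. Your closed form is in fact exactly right — the correction $\frac{m+5}{2^{m+3}(m+1)(m+2)}$ matches the direct check $\E[(1/4-\theta_1)^2]=7/48$ at $m=1$, whereas the paper's displayed correction carries a harmless arithmetic slip (effectively $m+4$ in place of $m+5$) — and in either case the stated bound for $m\ge 20$ follows with enormous slack since $(m+5)/2^{m+2}$ is decreasing and already about $6\times 10^{-6}$ at $m=20$.
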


\begin{proof}
Suppose that $X = \left(\frac{1}{4} - \theta_1 \right)^2$. Then, we have the CDF of $X$ is as follows:
\begin{align*}
F_X(t) &= \mathbb{P}\left(\theta \in \left[\frac{1}{4} - \sqrt{t}, \frac{1}{4} + \sqrt{t}\right]\right) \\
&= 2\sqrt{t} \cdot \1\left[0 \le t < \frac{1}{16}\right] + \left(\frac{1}{4} + \sqrt{t}\right)\cdot  \1\left[\frac{1}{16}\le t < \frac{9}{16}\right] + \1\left[t \ge \frac{9}{16}\right].
\end{align*}
Then, we have that the density is
\begin{align*}
    f_X(t) = \frac{1}{\sqrt{t}} \cdot \1\left[0 \le t < \frac{1}{16}\right] + \frac{1}{2\sqrt{t}}\cdot \1\left[\frac{1}{16} \le t < \frac{9}{16} \right]
\end{align*}
Thus, we have that the density of $X_m := \min_{j \in [m]} \left(\frac{1}{4} - \theta_j \right)^2$ has density
\begin{align*}
    f_{X_m}(t) &= m\cdot f_X(t) \cdot (1 - F_X(t))^{m-1} \\
    &= \frac{m}{\sqrt{t}} \cdot (1 - 2\sqrt{t})^{m-1} \cdot \1\left[0 \le t < \frac{1}{16}\right] + \frac{m}{2\sqrt{t}} \cdot \left(\frac{3}{4} - \sqrt{t}\right)^{m-1} \cdot \1\left[\frac{1}{16} \le t < \frac{9}{16} \right].
\end{align*}
Thus,
\begin{align*}
\E\left[\min_{j \in [m]} \left(\frac{1}{4} - \theta_j \right)^2\right]
&= \E[X_m] = \int t \cdot f_{X_m}(t) dt \\
&= \int_{0}^{\frac{1}{16}} m\cdot \sqrt{t}\cdot (1 - 2\sqrt{t})^{m-1} dt + \int_{\frac{1}{16}}^{\frac{9}{16}} \frac{m\sqrt{t}}{2} \left(\frac{3}{4} - \sqrt{t}\right)^{m-1} dt \\
&= \frac{\frac{1}{2} + 2^{-m-4}(-m^2-5m-8)}{(m+1)(m+2)} +\frac{2^{-m-4} (m^2 + 7m + 16)}{(m+1)(m+2)} \\
&\le \frac{1.01}{2(m+1)(m+2)},
\end{align*}
as desired.
\end{proof}

\begin{lemma}\label{lemma:logbound}
Suppose that $B \ge 3m^2\log(m)$ and $m \ge 100$, then the following holds.
\begin{align*}
    \left(1 - \frac{1}{m^2} \right)^B \le \frac{1}{16(m+1)(m+2)}.
\end{align*}
\end{lemma}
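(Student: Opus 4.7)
The plan is to use the standard inequality $\log(1-x) \le -x$ for $x \in [0,1)$, apply it with $x = 1/m^2$, multiply by $B$, exponentiate, and then finish by a direct polynomial comparison.

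First I would take logarithms and bound
\begin{equation*}
    B \log\!\left(1-\frac{1}{m^2}\right) \le -\frac{B}{m^2}.
\end{equation*}
By the hypothesis $B \ge 3m^2 \log(m)$, the right-hand side is at most $-3\log(m)$. Exponentiating yields
\begin{equation*}
    \left(1-\frac{1}{m^2}\right)^B \le e^{-3\log m} = \frac{1}{m^3}.
\end{equation*}

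Next I would reduce the claim to showing $m^3 \ge 16(m+1)(m+2)$ for $m \ge 100$. Expanding, this is equivalent to $m^3 - 16 m^2 - 48 m - 32 \ge 0$, or equivalently (dividing by $m$) $m^2 \ge 16 m + 48 + 32/m$. For $m \ge 100$ the right-hand side is at most $16m + 49$, and $m^2 - 16 m - 49 \ge 10000 - 1600 - 49 > 0$ already at $m=100$ and is increasing thereafter. Combining this with the earlier bound gives
\begin{equation*}
    \left(1-\frac{1}{m^2}\right)^B \le \frac{1}{m^3} \le \frac{1}{16(m+1)(m+2)},
\end{equation*}
as desired.

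There is no real obstacle here; this is a short calculus/algebra check. The only thing to be mindful of is the base of the logarithm used in the hypothesis $B \ge 3m^2\log(m)$ — the bound above is written assuming natural log (so that $e^{-3\log m}= m^{-3}$); if the paper intends $\log$ as base $e$ throughout (which is standard in this literature), the argument goes through verbatim, and the constants $3$ and $16$ leave plenty of slack so the constraint $m \ge 100$ is far from tight.
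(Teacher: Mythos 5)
Your proof is correct and follows essentially the same route as the paper: take logarithms, lower-bound $-\log\left(1-\frac{1}{m^2}\right)$ by $\frac{1}{m^2}$ (the paper does this via the Taylor series, you via $\log(1-x)\le -x$, which is the same first-term bound), exponentiate using $B\ge 3m^2\log(m)$, and finish with a polynomial comparison valid for $m\ge 100$. The only cosmetic difference is the intermediate target — you land on $\frac{1}{m^3}$ while the paper lands on $\frac{1}{17m^2}$ — and both comfortably dominate $\frac{1}{16(m+1)(m+2)}$; your reading of $\log$ as the natural logarithm matches the paper's usage.
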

\begin{proof}
We have using $m\ge 100$ and Taylor expansion of $\log$:
\begin{align*}
    B \ge 3m^2\log(m) \ge \frac{\log(17m^2)}{\frac{1}{m^2} + \frac{1}{2m^4} + \frac{1}{3m^6} + \cdots} = \frac{\log(17m^2)}{\log\left(1 - \frac{1}{m^2}\right)} = \frac{\log\left(\frac{1}{17m^2}\right)}{\log\left(1 - \frac{1}{m^2}\right)}.
\end{align*}
This implies that
\begin{align*}
    B \log\left(1 - \frac{1}{m^2}\right) \le \log\left(\frac{1}{17m^2}\right).
\end{align*}
Taking the exponential of both sides, we obtain
\begin{align*}
    \left(1 - \frac{1}{m^2} \right)^B \le \frac{1}{17m^2} \le \frac{1}{16(m+1)(m+2)},
\end{align*}
as desired.
\end{proof}

\begin{proof}[Proof of Theorem~\ref{theorem:linear_svm}]

Let $\theta_1,...,\theta_m$ and $-\tilde{\theta}_1,...,-\tilde{\theta}_m$ be the angles of the samples from the two classes, relative to the positive $x$-axis. We have that $\theta_i$ and $\tilde{\theta}_i$ are i.i.d. uniform from $[0, \frac{\pi}{2}]$ for $i \in [m]$.

Let $\theta_{(1)}$ and $\tilde{\theta}_{(1)}$ be the respective first order statistic of the angles for the two classes (i.e. $\theta_{(1)} = \min_{i\in[m]} \theta_i$ and $\tilde{\theta}_{(1)} = \min_{i\in[m]} \tilde{\theta}_i$).  Then, we have that $\theta_{(1)}$ and $\tilde{\theta}_{(1)}$ have distribution $\frac{\pi}{2} \cdot Beta(1, m)$ where $Beta(1, m)$ has density function $m(1 - x)^{m-1}\cdot \1[x \in [0, 1]]$ since the first order statistic of $m$ i.i.d. uniform distributions on $[0, 1]$ has $Beta(1, m)$ distribution. Then, we have that the decision boundary of linear SVM trained on $\mathcal{T}_0$, which we call $h_{\text{margin}}$ has angle $\frac{\theta_{(1)} - \tilde{\theta}_{(1)}}{2}$ w.r.t. the positive $x$-axis.

Let $\text{Bias}(h)$ denote the angle of the decision boundary of classifier $h$  w.r.t. the positive $x$-axis and define $ \mathcal{E}(h) := E_{\mathcal{Z}, \mathcal{T}_0}[\text{Bias}(h)^2]$. We thus have
\begin{align*}
    \mathcal{E}(h_{\text{margin}}) &= \E[ \text{Bias}(h_{\text{margin}})^2] = \E\left[\left(\frac{\theta_{(1)} - \tilde{\theta}_{(1)}}{2}\right)^2\right] \\
    &= \frac{1}{4}\left(\E[\theta_{(1)}^2] - 2 \E[\theta_{(1)}\cdot \tilde{\theta}_{(1)}] + \E[\tilde{\theta}_{(1)}^2]\right) \\
    &= \frac{\pi^2}{16}\left(\frac{2}{(m+2)(m+1)}-\frac{2}{(m+1)^2} +\frac{2}{(m+2)(m+1)} \right) \\
    &= \frac{\pi^2\cdot m}{8(m+1)}\cdot \frac{1}{(m+1)(m+2)}.
\end{align*}

Let $\mathcal{H}$ be the set of bootstrapped models. We have that the decision boundary for any $h \in \mathcal{H}$ is angle $\frac{\theta_i - \tilde{\theta}_j}{2}$ where $\theta_i$ is the smallest angle out of samples from the first class and $\tilde{\theta}_j$ is the smallest angle out of the samples from the second class. We now have
\begin{align*}
    \min_{h \in \mathcal{H}} \mathcal{E}(h) = \min_{h \in \mathcal{H}} \E[\text{Bias}(h)^2] &= \E\left[\min_{i, j \in [m]} \left(\frac{\theta_i - \tilde{\theta}_j}{2} \right)^2\right] \le 
    \E\left[\min_{j \in [m]} \left(\frac{\theta_{(\lfloor m/2\rfloor)} - \tilde{\theta}_j}{2} \right)^2\right],
\end{align*}
where $\theta_{(\lfloor m/2\rfloor)}$ is the $\lfloor m/2 \rfloor$-th order statistic of $\theta_1,...,\theta_m$. This is distributed as $\frac{\pi}{2} \cdot Beta(\lfloor m/2\rfloor, \lceil m/2\rceil)$. Therefore, $\E[\theta_{(\lfloor m/2\rfloor)}] = \frac{\pi}{2} \cdot \lfloor m/2\rfloor / m$ and $Var(\theta_{(\lfloor m/2\rfloor)}) = \frac{\pi^2}{4} \lfloor m/2\rfloor \cdot \lceil m/2\rceil / (m^2\cdot (m+1))$. Hence, using Chebyshev's inequality, we have
\begin{align*}
    \mathbb{P}\left[\left(\theta_{(\lfloor m/2\rfloor)} - \frac{\pi}{4}\right)^2 \ge \frac{\pi^2}{64}\right] \le 
    \frac{32}{m+1}.
\end{align*}
Therefore, 
\begin{align*}
     &\E\left[\min_{j \in [m]} \left(\frac{\theta_{(\lfloor m/2\rfloor)} - \tilde{\theta}_j}{2} \right)^2\right] \le 
     \E\left[\min_{j \in [m]} \left(\frac{\theta_{(\lfloor m/2\rfloor)} - \tilde{\theta}_j}{2} \right)^2  \middle| |\theta_{(\lfloor m/2\rfloor)} - \frac{\pi}{4}| \le \frac{\pi}{8} \right] \cdot  \left(1- \frac{32}{m+1}\right) 
    \\ &\hspace{3.5cm} + \E\left[\min_{j \in [m]} \left(\frac{\theta_{(\lfloor m/2\rfloor)} - \tilde{\theta}_j}{2} \right)^2  \middle| |\theta_{(\lfloor m/2\rfloor)} - \frac{\pi}{4}| > \frac{\pi}{8} \right] \cdot   \frac{32}{m+1} \\
    &\le \frac{1}{4} \cdot \E\left[\min_{j \in [m]} \left(\frac{\pi}{8} - \tilde{\theta}_j \right)^2 \right] \cdot  \left(1- \frac{32}{m+1}\right)  + \frac{1}{4} \cdot \E\left[\min_{j \in [m]} \tilde{\theta}_j ^2 \right] \cdot  \frac{32}{m+1}\\
    &=  \frac{1}{4} \cdot \E\left[\min_{j \in [m]} \left(\frac{\pi}{8} - \tilde{\theta}_j \right)^2 \right] \cdot \left(1- \frac{32}{m+1}\right)  + \frac{\pi^2}{8} \frac{1}{(m+1)(m+2)} \frac{32}{m+1} \\
    &\le \frac{\pi^2}{16} \frac{1.01}{2(m+1)(m+2)} \cdot \left(1- \frac{32}{m+1}\right)  + \frac{\pi^2}{8} \frac{1}{(m+1)(m+2)} \frac{32}{m+1} \le \frac{5\pi^2}{64(m+1)(m+2)}.
\end{align*}
where the second last inequality follows from Lemma~\ref{lemma:1}.

Next, the angles w.r.t. the positive $x$-axis of examples in $\mathcal{Z}$ are sampled uniformly in $[-\pi/2,\pi/2]$. Let $S_{\text{margin}}$ be the $B$ examples chosen from $\mathcal{Z}$ based on the margin.
Then under expectation, 
\begin{align*}
    \E[\Theta^*(S_{\text{margin}})] &\ge \mathcal{E}(h_{\text{margin}}) - \left(\frac{B\cdot \pi}{|\mathcal{Z}|}\right)^2 \ge \mathcal{E}(h_{\text{margin}})  - \frac{\pi^2}{128 (m+1)(m+2)} \\
    &\ge \frac{15\pi^2}{128 (m+1)(m+2)}  - \frac{\pi^2}{128 (m+1)(m+2)} \ge \frac{7\pi^2}{64 (m+1)(m+2)} ,
\end{align*}
where the second inequality follows from the condition that $|\mathcal{Z}| \ge 12Bm^2$.
%On the other hand, let $h_{\text{min-margin}}^*$ be the model trained after labeling $B$ examples based on min-margin.
Let $h_M := \argmin_{h \in \mathcal{H}} \text{Bias}(h)^2$ and let $A$ be the event that we sample a candidate from each class from $\mathcal{Z}$ that's closest to $h_M$ out of $\mathcal{H}$ under min-margin sampling. 
Then, we have
\begin{align*}
    \E[\Theta^*(S_{\text{min-margin}})] &\le \mathcal{E}(h_M) \cdot \mathbb{P}(A) + \frac{\pi^2}{4} \cdot (1 - \mathbb{P}(A)) + \left(\frac{B\cdot \pi}{|\mathcal{Z}|}\right)^2 \\
    &= \mathcal{E}(h_M) \cdot \left(1 - \left(1 - \frac{1}{m^2}\right)^{B}\right) +\frac{\pi^2}{4} \cdot (1 - \mathbb{P}(A)) + \left(\frac{B\cdot \pi}{|\mathcal{Z}|}\right)^2  \\
    &\le \mathcal{E}(h_M)\cdot \left(1 - \left(1 - \frac{1}{m^2}\right)^{B}\right)^2 + \frac{\pi^2}{4} \cdot (1 - \mathbb{P}(A))  + \frac{\pi^2}{64 (m+1)(m+2)}\\
    &\le \frac{5\pi^2}{64 (m+1)(m+2)} +  \frac{\pi^2}{64(m+1)(m+2)} +  \frac{\pi^2}{128 (m+1)(m+2)} \\
    &= \frac{13\pi^2}{128 (m+1)(m+2)},
\end{align*}
where the first inequality holds from the fact that under the event $A$, then we will choose a sample near $h_M$'s decision boundary and the angle of any sample is within $[-\pi/2,\pi/2]$ otherwise.
The third inequality holds by Lemma~\ref{lemma:logbound} and the bound on $\mathcal{E}(h_M) = \min_{h \in \mathcal{H}} \mathcal{E}(h)$ shown earlier.
It thus follows that
\begin{align*}
\E[\Theta^*(S_{\text{min-margin}})]< \E[\Theta^*(S_{\text{margin}})],
\end{align*}
as desired.
\end{proof}

\section{Additional Experiments}
\subsection{Additional baseline details}

All methods that use the bootstrapped models used the same set of bootstrapped classifiers. All compared methods are used under their default settings (for the bootstrap models, we used our recommended default of $K = 25$ bootstrap models, and $\beta = 1$ bootstrap fraction). We now give a description of the additional baselines used:
\begin{itemize}
    \item {\bf balanced-margin}. This is method of \cite{Brinker:2003} which adds to the objective a normalized term to encourage diversity. It selects a batch by iteratively choosing the candidate which minimizes $\lambda \cdot |g(x_i)| + (1 - \lambda) \max_{j \in S} k(x_i, x_j)$ where $S$ are the indices of the examples chosen thus far and adding it to the set $S$ until the batch size is reached and $k$ is a normalized distance kernel. We use their default setting of $\lambda = 0.5$ and use the cosine similarity as $k$.
    \item {\bf k-centers}. This is a method presented in \cite{sener2017active} which selects the batch using the greedy $k$-centers algorithm (i.e. setting $k$ to be the size of the batch and the centers would be the examples to query for labels). As a reminder, the $k$-centers objective is to find $k$ centers that minimizes the maximum distance from any example to its closest center. We use cosine similarity as the distance metric. We note that \cite{sener2017active} provides a robust version of $k$-centers which they show performs marginally better than $k$-centers. We use the basic version which has no hyperparameters to tune. We however note that \cite{sener2017active} designed this method for convolutional neural networks, while in our experiments, none were convolutional neural networks. 
    \item {\bf random-margin-mix}. This method chooses half the examples based on margin and the other half randomly.
\end{itemize}
We show additional charts comparing these methods. We found that these methods were often not competitive with many of the baselines shown in the main text. We show these for the benchmark datasets and MNIST and Fashion MNIST. For the real-world case studies, balanced-margin and k-centers were computationally infeasible due to the size of the datasets so we don't show the comparisons for those.

\subsection{Benchmark datasets}

\begin{figure}[H]
\begin{center}
    \includegraphics[width=0.45\textwidth]{figures/magic04}  
    \includegraphics[width=0.45\textwidth]{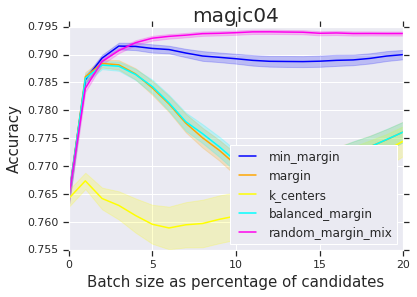}  
\end{center}
\end{figure}
\begin{figure}[H]
\begin{center}
    \includegraphics[width=0.45\textwidth]{figures/nomao}  
    \includegraphics[width=0.45\textwidth]{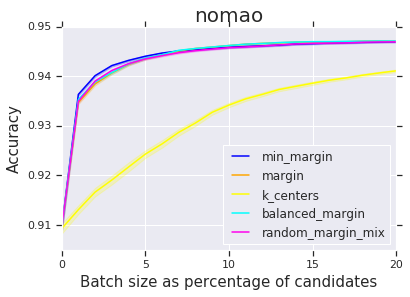}  
\end{center}
\end{figure}

\begin{figure}[H]
\begin{center}
    \includegraphics[width=0.45\textwidth]{figures/shuttle}  
    \includegraphics[width=0.45\textwidth]{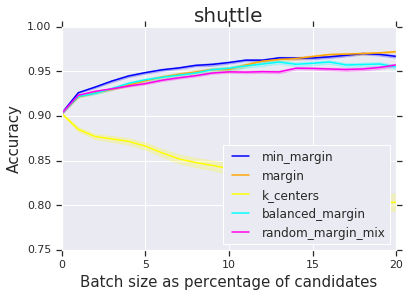}  
\end{center}
\end{figure}

\begin{figure}[H]
\begin{center}
    \includegraphics[width=0.45\textwidth]{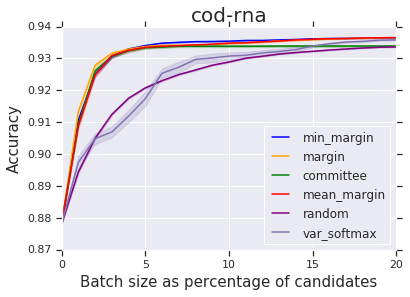}  
    \includegraphics[width=0.45\textwidth]{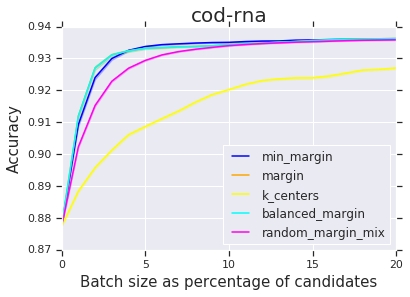}  
\end{center}
\end{figure}

\begin{figure}[H]
\begin{center}
    \includegraphics[width=0.45\textwidth]{figures/a9a}  
    \includegraphics[width=0.45\textwidth]{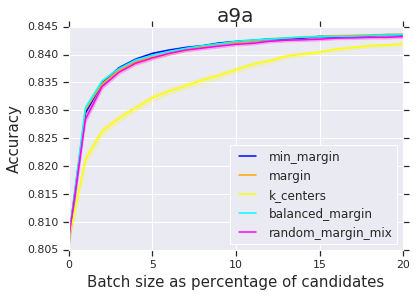}  
\end{center}
\end{figure}

\subsection{Real World Datasets}

\begin{figure}[H]
\begin{center}
    \includegraphics[width=0.31\textwidth]{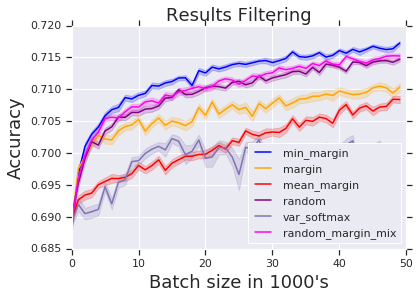}  
    \includegraphics[width=0.31\textwidth]{figures/filtering_5000}  
    \includegraphics[width=0.31\textwidth]{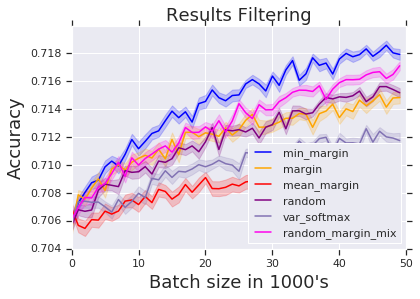}  
\end{center}
\caption{{\bf Results Filtering across initial sample sizes}. {\bf Left}: 2000 initial sample, {\bf Middle}: 5000 initial sample. {\bf Right}: 10,000 initial sample}
\end{figure}

\begin{figure}[H]
\begin{center}
    \includegraphics[width=0.31\textwidth]{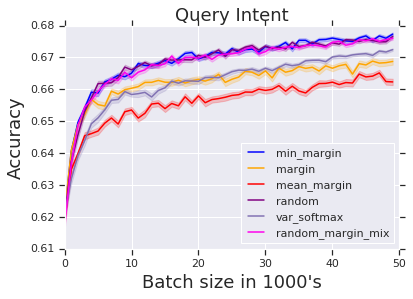}  
    \includegraphics[width=0.31\textwidth]{figures/query_intent_5000}  
    \includegraphics[width=0.31\textwidth]{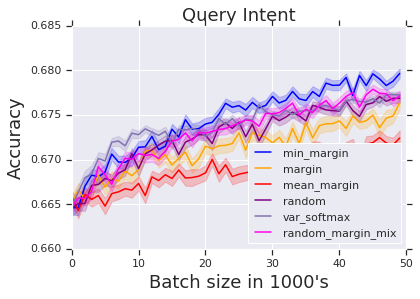}  
\end{center}
\caption{{\bf Query Intent across initial sample sizes}. {\bf Left}: 2000 initial sample, {\bf Middle}: 5000 initial sample. {\bf Right}: 10,000 initial sample}
\end{figure}
}

\end{document}